\newcommand{\mat}[1]{\boldsymbol{#1}}
\renewcommand{\vec}[1]{\boldsymbol{#1}}
\newcommand{\revcomment}[2]{\textcolor{red}{#2}$^{\textcolor{blue}{\# #1}}$} 
\renewcommand{\revcomment}[2]{#2}  
\newcommand{\mr}[1]{\mathrm{#1}}
\newcommand{\T}{^{\mathop{\mathrm{T}}}}
\renewcommand*\env@matrix[1][\arraystretch]{%
  \edef\arraystretch{#1}%
  \hskip -\arraycolsep
  \let\@ifnextchar\new@ifnextchar
  \array{*\c@MaxMatrixCols c}}
\newtheorem*{definition}{Definition}
\newtheorem{lemma}{Lemma}[section]
\newtheorem*{thm}{Theorem}
\newtheorem{remark}{Remark}[section]
\title{Connecting Gaits in Energetically Conservative Legged Systems}
\author{Maximilian Raff$^{1}$, Nelson Rosa Jr.$^{1}$, and C. David Remy$^{1}$

\thanks{Manuscript received: 4 February 2022; accepted 8 June 2022, This letter was recommended for publication by Associate Editor E. M. Hoffman and Editor L. Pallottino upon evaluation of the reviewers' comments.
This work was supported through the International Max Planck Research School for Intelligent Systems (IMPRS-IS) for Maximilian Raff and an Alexander von Humboldt fellowship to Nelson Rosa.  Additional funding for C. David Remy was provided by the Carl Zeiss Foundation. }
\thanks{$^{1}$The authors are with the Institute for Nonlinear Mechanics, University of Stuttgart, D-70569 Stuttgart, Germany.
        {\tt\small \{raff, nr, remy\}@inm.uni-stuttgart.de}}%
\thanks{Digital Object Identifier (DOI): see top of this page.}
}
\begin{document}

\begin{titlepage}
\begin{center}
This preprint has been accepted for publication in IEEE Robotics and Automation Letters.\\
DOI: 10.1109/LRA.2022.3186500\\
IEEE Explore: https://ieeexplore.ieee.org/document/9809817\\[5mm]

Please cite the paper as:
M. Raff, N. Rosa and C. D. Remy, "Connecting Gaits in Energetically Conservative Legged Systems," in IEEE Robotics and Automation Letters, vol. 7, no. 3, pp. 8407-8414, July 2022.\\[15mm]

\end{center}
\end{titlepage}

\maketitle

%
\begin{abstract}
In this work, we present a nonlinear dynamics perspective on generating and connecting gaits for energetically conservative models of legged systems.  
In particular, we show that the set of conservative gaits constitutes a connected space of locally defined 1D submanifolds in the gait space.
These manifolds are coordinate-free parameterized by energy level.

We present algorithms for identifying such families of gaits through the use of numerical continuation methods, generating sets and bifurcation points. 
To this end, we also introduce several details for the numerical implementation. 
Most importantly, we establish the necessary condition for the Delassus’ matrix to preserve energy across impacts.

 An important application of our work is with simple models of legged locomotion that are often able to capture the complexity of legged locomotion with just a few degrees of freedom and a small number of physical parameters.  
We demonstrate the efficacy of our framework on a one-legged hopper with four degrees of freedom.
\end{abstract}

\begin{IEEEkeywords}
	Energy conservation, passive gaits, legged robots, numerical continuation methods
\end{IEEEkeywords}

%
\section{Introduction}
\label{sec:Intro}
%
%

\IEEEPARstart{S}{implistic} conservative models of legged locomotion, in which no energy is lost during a stride, are a powerful tool for both the analysis of human and animal gaits in nature and the design and control of legged robots \mbox{\cite{mcmahon1987, Geyer2018, Koolen2012, Bhounsule2014}}.  With just a few degrees of freedom and a small number of physical parameters, these models can accurately predict the preferred locomotion patterns of humans \cite{Kuo_2001} and provide useful templates for energy-efficient robot motions \cite{Collins_2005}.

Despite the benefits of such models, the field is still lacking a unified approach that systematically takes advantage of the conservative nature of these models to identify and characterize the different types of periodic motions available.  This becomes even more important given that the same model can exhibit multiple modes of locomotion (e.g., walking, hopping, and running).  To the best of our knowledge, past works have only developed results for specific conservative models and gait type \cite{gan2018, oconnor2009, garcia1998simplest, merker2015} and not a class of energetically conservative systems with hybrid dynamics and multiple modes of locomotion.  The goal of this paper is to create a mathematical framework rooted in the theory of hybrid dynamical systems and nonlinear dynamics to model, classify, and create periodic motions for energetically conservative models (ECMs) of legged systems.

To this end, we generalize the methodology introduced in~\cite{gan2018} and carefully embed it into a mathematical framework for general ECMs of legged systems. 
We prove that families of gaits exist for such systems and highlight the role of energy in providing a coordinate-free parameterization for these families.
In order to make the approach practical, we present algorithms for identifying families of gaits through the use of numerical continuation methods and introduce a number of details for their implementation. 
Among others, these details include projecting the state space to the subspace of periodic motions, establishing the necessary condition for the Delassus’
matrix to preserve energy across impacts, introducing the
use of additional (holonomic) constraints to avoid singular
dynamics, embedding the conservative system in a one-parameter family of dissipative systems and transitioning from an event-driven formulation to a time-based formulation.

This paper can be considered to be a direct extension of~\cite{gan2018} which showed that a simple model exhibits all common bipedal gaits and that these form continuous families of gaits in the biped's space of trajectories.  
These periodic motions all emerged from a one-dimensional (1D) family of hopping-in-place gaits.  
Other gaits, such as walking and running, were connected to these through a series of bifurcations.
Furthermore, our work builds upon the one-parameter families of periodic orbits in smooth ECMs as they are the main subject in~\cite{sepulchre1997} and \cite{albu2020}.
While \cite{sepulchre1997} provides conditions for the existence of this family, \cite{albu2020} revisits concepts of so-called Nonlinear Normal Modes (NNMs) that aim to find analytic expressions of invariant lower-dimensional submanifolds.
Herein, NNMs are explicitly parameterized representations of 1D manifolds that emanate from exploiting the system's state dependencies inflicted by the conservation of energy.

In the remainder of this paper, we first introduce the mathematical theory for ECMs (Section~\ref{sec:theory}) before discussing a numerical algorithm for the automated search for gaits (Section~\ref{sec:implementation}).
The example application of a one-legged hopper then further illustrates these concepts (Section~\ref{sec:example}).

%
\section{Theory}
\label{sec:theory}

\subsection{Dynamics of Legged Systems}
\revcomment{1.11}{In our work, we consider rigid body systems subject to contact without sliding, as they are commonly used to model legged robotic systems.
An important restriction is that we limit ourselves to ECMs and periodic motions with a particular footfall sequence; for example, to either running or walking.
The state of such a system is given by the vector \mbox{$\vec x = \left( \vec q, \dot{\vec{q}}\right)\in T\mathcal{Q}\subset \mathbb{R}^{2n_\mathrm{q}}$}, where $n_\mathrm{q}$ is the number of its degrees of freedom and \mbox{$T\mathcal{Q}$} is the tangent bundle of the configuration space $\mathcal{Q}\subset \mathbb{R}^{n_\mathrm{q}}$.
In the following, we heavily rely on the concepts, assumptions, and notation from \cite{grizzle2014models}.
We refer to a motion within a \revcomment{1.3}{persistent contact configuration as a \emph{phase} $i$.}
These phases are executed in a fixed, repeating order $1\to2\to\dots\to m\to 1$.}
\revcomment{1.11}{Adopting the notation of \cite{grizzle2014models}, the hybrid model is written as
\begin{align*}
\Sigma: \left\{ 
\arraycolsep=1.0pt
\begin{array}{lll} \mathcal{X} &= {\{\mathcal{X}_i\}}_{i=1}^m :& \mathcal{X}_i=\{\vec x\in T\mathcal{Q}: \vec g_{i}(\vec q)=\vec{0} \}\\[2mm]
\mathcal{F} &= {\{\vec{f}_i\}}_{i=1}^m :& \dot{\vec{x}}= \vec f_i(\vec x), ~\vec x\in \mathcal{X}_i
\\[2mm]
\mathcal{E} &= {\{\mathcal{E}_i^{i+1}\}}_{i=1}^m :& \mathcal{E}_i^{i+1}=\left\{\vec x\in \mathcal{X}_i\bigg\vert \begin{matrix}e_i^{i+1}(\vec{x})=0,\\\dot e_i^{i+1}(\vec{x})<0\end{matrix}\right\}\\[4mm]
\mathcal{D} &= {\{\vec{\Delta}_i^{i+1}\}}_{i=1}^m :& \vec{x}^{+}= \vec\Delta_i^{i+1}(\vec x^-),\\
  &  & \vec x^-\in \mathcal{E}_i^{i+1},\vec x^+\in \mathcal{X}_{i+1}
\end{array}
\right.,
\end{align*}
where the codimension-one submanifold $\mathcal{E}_i^{i+1}$ determines a transition from phase $i$ to phase $i+1$ with the reset map~$\vec{\Delta}_i^{i+1}$.
The representation of the autonomous flow $\vec f_i$ in phase $i$ reflects the assumption of independent scleronomous constraints \revcomment{1.3}{$\vec g_i:\mathcal{Q}\to \mathbb{R}^{n_{\lambda_i}}$} that allows us to uniquely solve for contact forces \revcomment{1.3}{$\vec{\lambda}_i\in \mathbb{R}^{n_{\lambda_i}}$} (Theorem 5.1 \cite{brogliato2016}).
That is, the constraint Jacobian \mbox{$\mat{W}_i(\vec q)\T:=\partial \vec{g}_i/\partial \vec{q}$} in the differential-algebraic equation
\begin{subequations}\label{eq:DAE}
\begin{align}
\mat{M}(\vec{q})\ddot{\vec{q}}&=\vec k(\vec q)+\vec h(\vec q, \dot{\vec{q}})+ \mat W_i(\vec q) \vec \lambda_i,\label{eq:DAE1}\\
\vec g_i(\vec q)&=\vec{0},\label{eq:DAE2}
\end{align}
\end{subequations}
is full rank for all motions in phase $i$.
The mass matrix $\mat M$, elastic forces $\vec k$ and gravitational, centrifugal, and coriolis forces~$\vec h$ are derived from the kinetic energy \mbox{$E_\mathrm{kin}: T\mathcal{Q} \to \mathbb{R}$} and potential energy \mbox{$E_\mathrm{pot}: \mathcal{Q} \to \mathbb{R}$ of the system}.
Note that we exclude non-potential forces in equation~\eqref{eq:DAE1}, since~$\Sigma$ is assumed to be energetically conservative.}
\revcomment{1.11}{With~\mbox{$\vec x^-=\left( \vec q^-, \dot{\vec{q}}^-\right)\in\mathcal{E}_{i}^{i+1}$} and \mbox{$\vec x^+=\left( \vec q^+, \dot{\vec{q}}^+\right)\in\mathcal{X}_{i+1}$}, the reset map~$\vec{\Delta}_i^{i+1}$ does only alter the generalized velocities:
\begin{equation}\label{eq:DiscreteMap}
    \vec{x}^{+}= \vec\Delta_i^{i+1}(\vec x^-) = \begin{bmatrix}[1.2]\vec{q}^-\\\vec{P}_{i+1}(\vec q^-)\dot{\vec{q}}^-\end{bmatrix}.
\end{equation}
Since we consider plastic collisions with \mbox{$\mat W_{i+1}\T \dot{\vec{q}}^+=\vec 0$}, the reset map is given by~\mbox{$\vec{P}_{i+1}=\mat I-\mat M^{-1}\mat W_{i+1}\mat G_{i+1}^{-1}\mat W_{i+1}\T$}, where $\mat I$ is the identity matrix.}
In the field of nonlinear mechanics, the matrix \revcomment{1.11}{\mbox{$\mat G_{i+1} = \mat W_{i+1}\T \mat M^{-1} \mat W_{i+1}$}} is known as the Delassus' matrix of contact configuration \revcomment{1.11}{$i+1$} \cite{brogliato2016}.
It describes the inertial coupling in the active constraint space\footnote{$-\mat G_j^{-1}$ is called the constrained contact inertia tensor in \cite{johnson2016hybrid}.}.

\revcomment{1.11}{
As in \cite{grizzle2014models}, we also state the hybrid model as a tuple \mbox{$\Sigma =\left(\mathcal{X},\mathcal{E},\mathcal{D},\mathcal{F}\right)$}.
Furthermore, we take on the assumptions from \cite{grizzle2014models} to yield a well-posed hybrid model $\Sigma$. 
Some of these assumptions state that \revcomment{1.9}{$\Sigma$ is $C^1$}, a motion of $\Sigma$ is transversal to $\mathcal{E}_{i}^{i+1}$ if its closure intersects $\mathcal{E}_{i}^{i+1}$, and a solution through a domain $i$ must have a non-zero duration.
Hence, they avoid grazing contacts and chattering. 
Other assumptions are already built in the hybrid model $\Sigma$, such as a fixed cyclic phase sequence or scalar \emph{event functions} $e_i^{i+1}$.
\revcomment{1.3}{The latter excludes motions with simultaneous touch-downs and lift-offs, e.g., bipedal hopping or quadrupedal trotting.}
Please refer to \cite{grizzle2014models} and the references therein for a detailed overview of the required assumptions to hold for $\Sigma$.}

\revcomment{1.11}{
The \emph{phase flow} \mbox{$\vec \varphi_i: \mathbb{R}_{\geq 0} \times \mathcal{X}_i \rightarrow \mathcal{X}_i$} describes a solution to equations \eqref{eq:DAE} and thus, the motion through a phase~$i$ starting from an initial condition $\vec x_{0,i}\in\mathcal{X}_i$.
As in \cite{grizzle2014models}, we also define the phase-$i$ \emph{time-to-impact function} \mbox{$t_{\mr{I},i}(\vec{x}_{0,i})=:\inf\{t\geq 0 \vert \vec \varphi_i(t,\vec{x}_{0,i})\in\mathcal{E}_i^{i+1}\}$} if there exists a time~$t$ such that $\vec \varphi_i(t,\vec{x}_{0,i})\in\mathcal{E}_i^{i+1}$.
We start and end the cycle $1\to2\to\dots\to m\to 1$ within phase $i=1$ and denote the initial state to $\Sigma$ as \mbox{$\vec x_0:=\vec x_{0,1}$}.
With the assumptions in~\cite{grizzle2014models}, the \emph{hybrid flow} of a complete cycle \mbox{$\vec \varphi: \mathbb{R}_{\geq 0} \times \mathcal{X}_1 \rightarrow \mathcal{X}_1$} is recursively defined as
\begin{align}
    \vec x(t) &:= \vec \varphi(t,\vec x_0) = \vec{\varphi}_1\left(t-t_{\mr{I}},\vec{x}_{0,m+1}\right),\label{eq:flow}\\
    \vec{x}_{0,i+1} &= \vec{\Delta}_i^{i+1}\circ\vec{\varphi}_{i}\left(t_{\mr{I},i}\circ\vec{x}_{0,i},\vec{x}_{0,i}\right), ~
    i=1,\dots,m,\label{eq:recursion}
\end{align}
where $\vec{\Delta}_m^{m+1}=\vec{\Delta}_m^{1}$ and $0\leq t-t_{\mr{I}}< t_{\mr{I},1}\circ\vec{x}_{0,m+1}$ with the accumulated impact times \mbox{$t_{\mr{I}}:=\Sigma_{i=1}^m t_{\mr{I},i}\circ\vec{x}_{0,i}$}.
To further simplify the following statements, let us define the interval \mbox{$\mathcal{I}:=[t_{\mr{I}},t_{\mr{I}}+t_{\mr{I},1}\circ\vec{x}_{0,m+1})$}.
Herein, $t-t_{\mr{I}}\in\mathcal{I}$ is the time spent in the last phase of a cycle.}
\revcomment{1.11}{
\begin{remark}
In contrast to the \emph{Poincaré return map} in \cite{grizzle2014models}, the initial condition $\vec{x}_{0}$ of the hybrid flow in equation \eqref{eq:flow} can be chosen arbitrarily in the domain $\mathcal{X}_1$ and does not necessarily lie in the image of $\vec{\Delta}_m^1$. 
This definition of the hybrid flow enables us to directly relate to known properties of autonomous nonlinear dynamical systems.
It will, however, require the construction of an additional event-like \emph{anchor constraint} later on.
\end{remark}
}
\revcomment{1.9}{With the aforementioned assumptions from \cite{grizzle2014models}, the fundamental solution matrix
\begin{equation}
\label{eq:fundamentalMatrixHybrid}
    \mat{\Phi}\left(t,\vec{x}_0\right) = \frac{\partial \vec{\varphi}\left(t,\vec{x}\right)}{\partial \vec{x}}\bigg|_{\vec x=\vec{x}_0} \in \mathbb{R}^{2n_\mr{q} \times 2n_\mr{q}}
\end{equation}
is well-defined for any $t\in\mathcal{I}$ \cite{muller1995, ivanov1998}.}

\subsection{Periodic Solutions in Energetically Conservative Hybrid Dynamical Systems}
\revcomment{1.11}{
The total energy of the hybrid model $\Sigma$ is given by \mbox{$E(\vec x) =E_\mathrm{kin}(\vec q,\dot{\vec q})+E_\mathrm{pot}(\vec q)$}.
\begin{definition}[Energetically Conservative
Model]
The hybrid system $\Sigma$ is an energetically conservative model (ECM) if 
\begin{enumerate}[{Df}1]
    \item all forces in the continuous dynamics of equation \eqref{eq:DAE1} are conservative forces and
    \item for all reset maps $\vec{x}^+= \vec \Delta_i^{i+1}(\vec x^-)$ it holds \mbox{$E(\vec x^+) = E(\vec x^-)$}. This implies \mbox{$E_\mathrm{kin}(\vec x^+) = E_\mathrm{kin}(\vec x^-)$}, since the discrete dynamics, with $\vec q^+=\vec q^-$, do not change the value of $E_\mathrm{pot}$; i.e., $E_\mathrm{pot}(\vec q^+) = E_\mathrm{pot}(\vec q^-)$.
\end{enumerate}
\end{definition}
\revcomment{1.11}{
The definition of an ECM implies that for any $\vec x_0$ its total energy $E$ is invariant under the hybrid flow $\varphi(t,\vec{x}_0)$ for all times $t\in\mathcal{I}$.
}
%
\begin{definition}[Hybrid Periodic Flow]
\revcomment{1.11}{A hybrid flow defined by equation \eqref{eq:flow} is \emph{periodic}, if there exists a period time \mbox{$T\in\mathcal{I}$}, such that 
\begin{equation}\label{eq:periodicity}
    \vec{\varphi}(T,\vec{x}_0) - \vec{x}_0 = \vec{0}.
\end{equation}}
\end{definition}
}

\begin{definition}[Monodromy Matrix]
The local linearization of a periodic solution \mbox{$\mat{\Phi}_T:=\mat{\Phi}\left(T,\vec{x}_0\right)$} is called the monodromy matrix.
\end{definition}

The monodromy matrix is an important tool to study the stability and local existence of periodic flows (Chapter~7.1.1~\cite{leine2013dynamics}).
For \revcomment{1.11}{autonomous} ECMs, it holds that:
\revcomment{1.11}{
\begin{align}
    \mat{\Phi}_T \vec{f}_1(\vec{x}_0) &= \vec{f}_1(\vec{x}_0),\label{prop:FreedomOfPhase}\\
    \nabla E\left(\vec{x}_0\right)\T \mat \Phi_T &= \nabla E(\vec{x}_0)\T.\label{eq:conserveEnergy}
\end{align}}
Equation \eqref{prop:FreedomOfPhase} is the well known freedom of phase in autonomous systems, as any disturbance along the flow will remain on the same periodic motion in $T\mathcal{Q}$ (Theorem~2~\cite{sepulchre1997}).
\revcomment{1.11}{Furthermore, since the total energy is flow-invariant: $E(\vec \varphi(t,\vec{x}_0))=const.=\bar{E}$, this yields the property in equation \eqref{eq:conserveEnergy} (Chapter~2.4.~\cite{sepulchre1997}).}

\revcomment{1,11}{
\begin{lemma}\label{remark:perbVec}
Outside of an equilibrium, where $\nabla E(\vec{x}_0)$ and $\vec f_1(\vec{x}_0)$ are non-zero for a mechanical system, these vectors are also perpendicular.
\end{lemma}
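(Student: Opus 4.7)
The plan is to derive perpendicularity directly from the fact that total energy is an integral of motion for the continuous dynamics in phase $1$. Since the system is an ECM, Df1 guarantees that along any trajectory of $\vec{f}_1$ the total energy is constant, i.e., $E(\vec{\varphi}_1(t,\vec{x}_0)) = \bar{E}$ for all $t$ in the phase interval. Differentiating this identity with respect to time and applying the chain rule gives
\begin{equation*}
    0 = \frac{d}{dt} E(\vec{\varphi}_1(t,\vec{x}_0)) = \nabla E(\vec{\varphi}_1(t,\vec{x}_0))\T \vec{f}_1(\vec{\varphi}_1(t,\vec{x}_0)).
\end{equation*}

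Evaluating at $t=0$ immediately yields $\nabla E(\vec{x}_0)\T \vec{f}_1(\vec{x}_0) = 0$, which is the desired orthogonality relation in $\mathbb{R}^{2n_\mr{q}}$ equipped with the Euclidean inner product. The hypothesis that we are outside an equilibrium (so that $\vec{f}_1(\vec{x}_0)\neq \vec{0}$) and that $\nabla E(\vec{x}_0)\neq \vec{0}$ is what promotes this annihilation from the trivial case (one factor vanishing) to genuine perpendicularity between two non-zero vectors.

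The only subtlety worth flagging is ensuring the argument is valid at the specific point $\vec{x}_0\in\mathcal{X}_1$ used in the lemma rather than at an arbitrary interior time; but because $\vec{x}_0$ lies in the domain $\mathcal{X}_1$ on which $\vec{f}_1$ is defined, and because the flow $\vec{\varphi}_1$ is $C^1$ by the standing assumptions inherited from \cite{grizzle2014models}, the chain-rule step at $t=0^+$ is justified. The non-zero-gradient condition is automatic for a typical mechanical system away from an equilibrium, since $\nabla E$ contains the block $\mat{M}(\vec{q})\dot{\vec{q}}$, which is non-zero whenever $\dot{\vec{q}}\neq \vec{0}$ because $\mat{M}$ is positive definite. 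Hence no obstacle of real depth arises; the lemma is essentially a one-line consequence of energy conservation, and the main purpose of the statement is to record it for later use (e.g., to argue that $\nabla E$ provides a direction transverse to $\vec{f}_1$, which is useful when parameterizing the 1D family of gaits by energy level).
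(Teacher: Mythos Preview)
Your proof is correct and follows essentially the same approach as the paper: both differentiate the flow-invariance identity $E(\vec{\varphi}_1(t,\vec{x}_0))=\text{const.}$ at $t=0$ via the chain rule to obtain $\nabla E(\vec{x}_0)\T \vec{f}_1(\vec{x}_0)=0$. Your version simply adds helpful commentary on the non-vanishing hypotheses and the regularity needed for the chain-rule step, which the paper leaves implicit.
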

\begin{proof}
Since the energy $E(\vec \varphi_1(t,\vec{x}_0))$ in phase $i$ is constant for all \mbox{$t\in[0,t_{\mr{I},1}(\vec{x_0}))$}, this implies:
\begin{equation}\label{eq:perpVectors}
    \frac{\mr{d}}{\mr{d}t}E(\vec \varphi_1(t,\vec{x}_0))\bigg\vert_{t=0} = \nabla E(\vec{x}_0)\T \vec{f}_1(\vec{x}_0) =0.
\end{equation}
\end{proof}
}
\subsection{Connected Components of Energetically Conservative Gaits}

The purpose of this work is to show connections between different periodic motions that we will refer to as different \emph{gaits}.
To eliminate the freedom-of-phase that is inherent to any autonomous system, we introduce an anchor constraint to further specify the solution that constitutes a specific gait:

\begin{definition}[Gait]
A gait is a periodic solution that also fulfills the anchor constraint \revcomment{1.11}{$a(\vec{x}_0)=0$}, where $a:\mathcal{X}_1\to\mathbb{R}$ is a smooth function for which the transversality condition \revcomment{1.11}{$\nabla a(\vec{x}_0)\T \vec{f}_1(\vec x_0)\neq 0$} holds.
\end{definition}

\revcomment{1.11}{\begin{thm}[Family of Gaits]
In the vicinity of a energetically conservative gait there exist neighboring gaits.
\begin{proof}
Due to the periodicity, it must hold:
\begin{align}
    a(\vec{x}_0)=a(\vec{\varphi}(T(\vec{x}_0),\vec{x}_0)) = 0,
\end{align}
where we abuse the notation of the period $T=T(\vec{x}_0)$ to indicate its general dependency on $\vec{x}_0$. 
Using the implicit function theorem, we get:
\begin{align}
    \frac{\partial T}{\partial \vec{x}_0} &= -\frac{\nabla a(\vec{x}_0)\T}{\nabla a(\vec{x}_0)\T \vec{f}_1(\vec x_0)} \mat{\Phi}_T.\label{eq:dTdx}
\end{align}
To explore neighboring gaits, we perturb the initial state of the periodic solution \eqref{eq:periodicity} by an infinitesimal $\delta \vec x$:
\begin{equation}\label{eq:periodicityPerturbed}
    \vec{\varphi}\left(T(\vec{x}_0+\delta \vec{x}),\vec x_0+\delta \vec{x}\right)-\left(\vec{x}_0 +\delta \vec{x}\right) = \vec{0}.
\end{equation}
A first-order approximation of equation \eqref{eq:periodicityPerturbed} yields
\begin{align}
    &\underbrace{\vec{\varphi}(T(\vec{x}_0),\vec{x}_0)-\vec{x}_0}_{\overset{\eqref{eq:periodicity}}{=}\vec{0}} + \vec{f}_1(\vec x_0)\frac{\partial T}{\partial \vec{x}_0}\delta \vec x +\mat{\Phi}_T\delta \vec{x} -\delta \vec{x} = \vec{0},\notag\\
    &\overset{\eqref{eq:dTdx}}{\Rightarrow} \Bigg(\underbrace{\mat{\Phi}_T -\mat{I}-\frac{\vec{f}_1(\vec x_0)\nabla a(\vec{x}_0)\T}{\nabla a(\vec{x}_0)\T \vec{f}_1(\vec x_0)} \mat{\Phi}_T}_{=:\mat{D}}\Bigg)\delta\vec{x}  = \vec 0.\label{eq:firstOrderNoFreedom}
\end{align}
As the anchor constraint removes the freedom of phase, 
\mbox{$\delta \vec{x}=\vec{f}_1(\vec x_0) \delta t$}, with $\vert \delta t\vert \ll 1$, does not solve equation~\eqref{eq:firstOrderNoFreedom}, since
$\mat D\vec{f}_1(\vec x_0) = -\vec{f}_1(\vec x_0)$. 
Yet, because of equation \eqref{eq:conserveEnergy} and $\nabla E(\vec{x}_0)\T\vec{f}_1(\vec x_0) = 0 $ (Lemma \ref{remark:perbVec}), $\nabla E(\vec{x}_0)$ is in the kernel of $\mat D\T$. 
This implies that $\mr{dim}(\mr{ker}(\mat D))\geq 1$ and thus, the existence of a nontrivial direction $\delta \vec{x}$ which must be linearly independent of $\vec{f}_1(\vec x_0)$. 
\end{proof}
\end{thm}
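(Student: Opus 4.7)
The plan is to follow the standard existence argument for one-parameter families of periodic orbits, but adapted to a hybrid conservative setting with an anchor constraint. Two scalar conditions define a gait: the periodicity $\vec\varphi(T,\vec x_0)-\vec x_0=\vec 0$ and the anchor constraint $a(\vec x_0)=0$. The goal is to find a nontrivial direction $\delta \vec x$ in $T\mathcal{X}_1$ along which both conditions can be preserved to first order, which will serve as the tangent to the family of neighboring gaits.

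First I would eliminate $T$ by treating it as an implicit function of the initial condition. Since the anchor constraint must be satisfied at the end of a cycle as well, $a(\vec\varphi(T(\vec x_0),\vec x_0)) = 0$; differentiating and using the transversality condition $\nabla a(\vec{x}_0)\T \vec{f}_1(\vec x_0)\neq 0$ yields an explicit formula for $\partial T / \partial \vec x_0$ in terms of $\mat\Phi_T$, $\nabla a(\vec x_0)$ and $\vec f_1(\vec x_0)$, as in equation \eqref{eq:dTdx}. Substituting this back into the first-order expansion of the periodicity equation collapses the two scalar conditions into a single linear system of the form $\mat D\,\delta \vec x = \vec 0$, where $\mat D$ is the monodromy $\mat \Phi_T - \mat I$ corrected by a rank-one projection that removes the freedom of phase.

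The next step is to show that $\mat D$ has a nontrivial kernel and to exhibit a distinguished kernel direction. Here the energetic conservation enters decisively. By equation \eqref{eq:conserveEnergy} the co-vector $\nabla E(\vec x_0)$ is a left eigenvector of $\mat \Phi_T$ with eigenvalue one, and by Lemma \ref{remark:perbVec} it is orthogonal to $\vec f_1(\vec x_0)$; together these imply $\nabla E(\vec x_0)\T \mat D = \vec 0$, so $\nabla E(\vec x_0) \in \ker \mat D\T$. Hence $\dim \ker \mat D \geq 1$. To confirm that the corresponding right-kernel direction is genuinely new, I would check that it cannot be the trivial phase-shift $\vec f_1(\vec x_0)$: indeed, a short computation gives $\mat D\vec f_1(\vec x_0) = -\vec f_1(\vec x_0)\neq \vec 0$, so the nontrivial element of $\ker \mat D$ is linearly independent of the flow direction.

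The main obstacle will be going from the first-order solvability of $\mat D\,\delta\vec x=\vec 0$ to an actual continuous family of gaits. A full implicit-function-theorem argument would normally require the linearization to have a one-dimensional kernel with a matching transversality property (so that a Lyapunov--Schmidt reduction is possible), and one must also handle the hybrid jumps carefully so that the fundamental matrix $\mat \Phi_T$ from equation \eqref{eq:fundamentalMatrixHybrid} is well-defined along the perturbed orbits. I would anticipate that the proof presented in the paper stops at the linearized level, relying on the conservative structure (as in Theorem 2 of \cite{sepulchre1997}) to guarantee that the tangent direction integrates to a true 1D submanifold parameterized by the energy~$E$.
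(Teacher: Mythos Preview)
Your proposal is correct and follows essentially the same route as the paper: eliminate $T$ via the anchor constraint and implicit function theorem, linearize the periodicity condition to obtain the rank-one-corrected matrix $\mat D$, use $\nabla E(\vec x_0)\in\ker\mat D\T$ (from \eqref{eq:conserveEnergy} and Lemma~\ref{remark:perbVec}) to force $\dim\ker\mat D\geq 1$, and verify $\mat D\vec f_1(\vec x_0)=-\vec f_1(\vec x_0)$ to rule out the trivial phase-shift direction. You also correctly anticipate that the paper stops at the linearized level rather than completing a full Lyapunov--Schmidt or implicit-function argument for the existence of the one-dimensional family.
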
}
\begin{remark}\label{remark:PeriodTime}
Unlike in linear systems, the period $T$ can change locally in nonlinear systems. This information is lost in the linearization $\mat{\Phi}_T$. However, imposing an anchor constraint on equation \eqref{eq:periodicity} implicitly defines a Poincaré section \cite{leine2013dynamics}, which associates $T$ with the initial states $\vec{x}_0$.
\end{remark}
\begin{remark}
This Proprosition is an extension of Theorem~4 in \cite{sepulchre1997} that proves that for smooth conservative dynamics, orbits are dense in the state space $T\mathcal{Q}$.
\end{remark}
\begin{remark}
What was shown here for energy can be extended to other flow invariant functions\footnote{These so-called first integrals are considered in \cite{munoz2003continuation} for smooth systems.}. 
For example, in some mechanical systems, linear or angular momentum may be conserved.
The existence of such invariants can then lead to additional left eigenvectors as in equation \eqref{eq:conserveEnergy} and hence in the kernel of $\mat{D}\T$ in equation \eqref{eq:firstOrderNoFreedom}.
\end{remark}

We propose to parameterize the resulting families of connected gaits by energy level $\bar{E}$.
While other parameterizations are possible (e.g., using a state variable, such as speed \cite{gan2018}), $\bar{E}$ gives a more general coordinate-free parameterization for ECMs, since gaits are inherently constrained to an equipotential surface (Lemma \ref{remark:perbVec}).
%
%
%
This parameterization is reflected in:
\begin{equation}
    \label{eq:uniqueSolution}
    \vec{r}_{\bar{E}}(\vec{x}_0,T) := 
    \begin{bmatrix}[1.2]
     \varphi(T,\vec{x}_0) - \vec{x}_0  \\
      a(\vec{x}_0) \\
      E(\vec{x}_0) - \bar{E}  
    \end{bmatrix}
    =
    \vec{0},
\end{equation}
with its derivative
\revcomment{1.11}{
\begin{equation*}
    \begin{aligned}
    \mat{R}_{\bar{E}}(\vec{x}_0,T) := 
      \dfrac{\partial \vec{r}_{\bar{E}}}{\partial [T~\vec{x}_0\T]}= \begin{bmatrix}[1.2]
      \mat{\Phi}_T-\mat{I}& \vec{f}_1(\vec x(T))\\
       \nabla a(\vec{x}_0)\T & 0\\
       \nabla E(\vec{x}_0)\T & 0
    \end{bmatrix}.
    \end{aligned}
\end{equation*}
}
%
The set of all solutions (with admissible flow) to equation~\eqref{eq:uniqueSolution} for all possible energy levels $\bar{E}$ constitutes the \emph{gait space} \mbox{$\mathcal{G}=\{(\vec{x}_0,T,\bar{E})\in \mathcal{X}_1\times \mathbb{R}_{>0}\times \mathbb{R}~:~ \vec{r}_{\bar{E}}(T,\vec{x}_0) = \vec{0}\}$}.
\begin{definition}[Regular Point]
We call a solution $\vec{z}^\ast$ of an implicit function \revcomment{1.11}{$\vec{F}: \mathbb{R}^j \to \mathbb{R}^k$ with $\vec{F}(\vec z^\ast)=\vec{0}$} a \emph{regular point} if $(\partial F/\partial \vec z)\vert_{\vec{z}=\vec{z}^\ast}$ has maximum rank.
\end{definition}

\begin{figure}[t]
    \centering
    \includegraphics{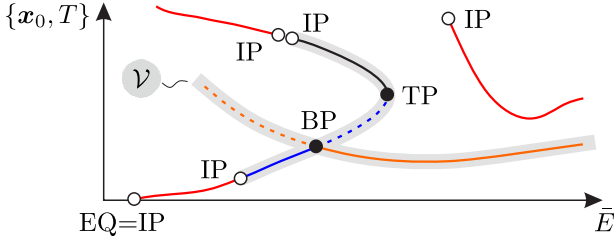}
    \caption{Different generators are connected by bifurcation (BP) and turning (TP) points and constitute the connected component $\mathcal{V}$ of the gait space $\mathcal{G}$. 
    Isolated generators and generators (red) that only connect to inadmissible points~(IP), including equilibira (EQ), are disjoint. 
    Hence, they are part of different connected components.}
    \label{fig:Generators}
\end{figure}

While $\mat{R}_{\bar{E}}$ has full rank, there exists a set of regular points $(\vec{x}_0, T, \bar{E})$ that form a locally defined \mbox{1D} submanifold~$\mathcal {M}\subseteq~\mathcal{G}$.  
Since each point of $\mathcal{M}$ represents a periodic motion, $\mathcal{M}$ is also called a generator for a two-dimensional invariant set of solutions in the state space $T\mathcal{Q}$ \cite{albu2020}.
\begin{definition}[Generators] ~
\begin{enumerate}[1)]
    \item A set $\mathcal{S}\subseteq \mathcal{G}$ is path-connected if for any two points $a,b\in\mathcal{S}$, there exists a continuous function $\gamma:[0,1]\to \mathcal{G}$ such that $\gamma(0)=a$ and $\gamma(1)=b$.
    \item A set $\mathcal{M}\subseteq~\mathcal{G}$ is called a generator if it is path-connected and all points $a\in \mathcal{M}$ are regular.
\end{enumerate}
\end{definition}
%
Generators can border to a point $(\vec{x}_0, T, \bar{E})\notin \mathcal{G}$ which \revcomment{1.11}{do not meet the assumptions in \cite{grizzle2014models}} (e.g., solutions with grazing or with a change in phase sequence). 
We refer to these points as inadmissible points~(IP) (Fig. \ref{fig:Generators}).
Alternatively, they can border to a point $(\vec{x}_0, T, \bar{E})\in \mathcal{G}$ for which $\mat{R}_{\bar{E}}$ becomes rank deficient.
These singularities either constitute turning points~(TP) (i.e., extremal values for the parameter~$\bar{E}$)
or bifurcations~(BP) in which the periodic solutions of equation~\eqref{eq:uniqueSolution} are no longer distinct.
Both types of singularities connect different generators to form a connected component~$\mathcal{V}$ of the gait space.

\begin{definition}[Connected Components]
    A set $\mathcal{V}\subseteq~\mathcal{G}$ is a connected component of $\mathcal{G}$ if $\mathcal{V}$ is path-connected and is maximal with respect to inclusion (Definition 2.3 \cite{Rosa2021}).
\end{definition}

How such connected components of the gait space can be efficiently computed, will be discussed in the following.

%
\section{Implementation}
\label{sec:implementation}
\subsection{Constructing Conservative Models}
To implement an energetically conservative model of legged locomotion, the properties \textit{Df1} and \textit{Df2} in \revcomment{2.2}{the ECM definition} must be fulfilled.
\revcomment{1.11}{\textit{Df1} can be easily satisfied by implementing ideal constraints and omitting additional joint torques in equations \eqref{eq:DAE}}.
To satisfy \revcomment{1.11}{\textit{Df2}} at touch-down events, we have to account for the changes in velocity, yielding:
\begin{equation}
    \dot{\vec{q}}^{+{\mathop{\mathrm{T}}}} \vec M \dot{\vec{q}}^+ - \dot{\vec{q}}^{-{\mathop{\mathrm{T}}}} \vec M \dot{\vec{q}}^- =0, \quad \forall \dot{\vec{q}}^-.\label{eq:constantKineticEnergy}
\end{equation}
Using the projection in equation \eqref{eq:DiscreteMap}, we can write this as
\revcomment{1.11}{
\begin{equation}\label{eq:noEnergyLoss}
    \left(\mat W_{i+1}\T \dot{\vec{q}}^-\right)\T \mat G_{i+1}^{-1} \left(\mat W_{i+1}\T \dot{\vec{q}}^-\right) = 0,~~ \forall \left(\mat W_{i+1}\T \dot{\vec{q}}^-\right).
\end{equation}
}
In the general case, energy conservation would only be possible if the inverse Delassus' matrix \revcomment{1.11}{$\mat G_{i+1}^{-1}$} were zero.
Loosely speaking, this is because inertia and masses involved in the projection need to vanish to conserve energy.
This is problematic, as this requirement leads to singularities in the systems mass matrix $\mat{M}$.

Instead, we consider vanishing masses and inertias only as a limiting case.
That is, with some abuse of notation, we define a parameterized mass matrix $\mat{M}(\vec{q},\varepsilon)=\mat M_\varepsilon$, with parameter~$\varepsilon$ such that the Delassus' matrix reads as \revcomment{1.11}{$\mat{G}_{i+1}(\vec{q},\varepsilon)$}. 
This parameterization must yield
\revcomment{1.11}{
\begin{equation}\label{eq:ZeroDelassus}
    \lim_{\varepsilon\to 0} \mat{G}_{i+1}(\vec{q},\varepsilon)^{-1}=\mat{0}.
\end{equation}
}
Considering equation \eqref{eq:noEnergyLoss}, the mechanical system is only energetically conservative in the limit of $\varepsilon \to 0$.
As pointed out in chapter 2.3. of \cite{johnson2016hybrid}, massless appendages of a robot possibly yield an inconsistent relationship between accelerations and net forces in equation \eqref{eq:DAE1}. Hence, any rank deficiency of the mass matrix $\mat M_{\varepsilon=0}$ has to be corrected by constraints~\mbox{($\mat{W}_i$, $\vec{\lambda}_i$)} to ensure unique, finite dimensional dynamics.
With this, it is possible to cancel out appearing singularities in the inverse mass matrix $\mat M_\varepsilon^{-1}$ by introducing a parametric scaling with $\varepsilon$ in $\vec{h}$, $\vec{k}$ and $\vec{\lambda}_i$ such that equation~\eqref{eq:DAE1} can be stated as
\revcomment{3.3}{
\begin{equation}\label{eq:scaledEOM}
    \ddot{\vec{q}} = \mat M_\varepsilon^{-1}\vec{h}(\vec{x},\varepsilon)+\mat M_\varepsilon^{-1}\left(\vec{k}(\vec{q},\varepsilon)+\mat W_i(\vec q) \vec \lambda_i(\varepsilon)\right).
\end{equation}}
The resulting conservative vector field, defined by equation~\eqref{eq:scaledEOM}, is \revcomment{1.11}{$C^1$} and complete in the analytic limit of $\varepsilon\to 0$.
\revcomment{3.3}{In other words: while $\mat M_\varepsilon$ can become singular in the limit of $\varepsilon \to 0$, the products $\mat M_\varepsilon^{-1}\vec{h}$ and $\mat M_\varepsilon^{-1}(\vec{k} + \mat W_i \vec \lambda_i)$ remain finite.}
\begin{remark}
The vector field properties are similar to A7 in \cite{johnson2016hybrid}.
However, we do not \revcomment{3.3}{need to require} $\left[\begin{smallmatrix} \mat M&\mat{W}_i\\ \mat{W}_i\T&\vec 0 \end{smallmatrix}\right]$ to be invertible in the limit and do not explicitly change the topology of the robot whenever a massless limb is unconstrained to the ground (A6 in \cite{johnson2016hybrid}).
\end{remark}

\subsection{Numerical Exploration}
The goal of our implementation is to solve the implicit function~\eqref{eq:uniqueSolution} in a systematic fashion to obtain the connected component $\mathcal{V}$. 
Our primary tool for the computation of generators are numerical continuation methods \cite{allgower2012}.

The issue with numerically solving equation~\eqref{eq:uniqueSolution} is that it has $2n_\mr{q}+2$ constraints but only $2n_\mr{q}+1$ decision variables in~$\vec{x}_0$ and $T$. 
In theory, this is no problem, as the equations in~\eqref{eq:uniqueSolution} are not independent due to the energetically conservative nature of the dynamics \cite{pokorny2009}, as was shown above.
In practice, however, this can cause issues, as fluctuations in energy can be introduced during numerical integration.
When this is the case, equation~\eqref{eq:uniqueSolution} may not be solvable with only $2n_\mr{q}+1$ decision variables.
To tackle this issue, we use the approach reported in \cite{sepulchre1997} and add a parameter $\xi$ to the continuous dynamics \revcomment{1.11}{\eqref{eq:DAE}:
\begin{equation}\label{eq:hybridDynamicsNum}
\mathcal{F}_\xi = {\{\tilde{\vec{f}}_i\}}_{i=1}^m : \tilde{\vec{f}}_i := \vec f_i(\vec x)+ \xi\cdot \nabla E(\vec x).
\end{equation}
}
\revcomment{1.11}{With the new representation \eqref{eq:hybridDynamicsNum}, the conservative system~$\Sigma$ is embedded in a one-parameter family of dissipative dynamics~\mbox{$\Sigma_\xi =\left(\mathcal{X},\mathcal{E},\mathcal{D},\mathcal{F}_\xi\right)$}.} 
Analytically, a periodic orbit only exists for a vanishing perturbation $\xi$ (Lemma 1 \cite{sepulchre1997}). 
Hence, solutions \mbox{$\vec\varphi(t,\vec x_0, \xi)$} of $\Sigma_{\xi}$ with $\xi=0$ are periodic solutions of the underlying conservative system.
In the numerical computation of gaits, however, we might obtain solutions with a small~$\xi$ to compensate for small energy losses caused by numerical damping in the integration schemes.

Gaits of legged systems, are not necessarily periodic in all states.
In particular, the horizontal position is aperiodic to allow for forward motion.
Hence, to relax the periodicity constraint \eqref{eq:periodicity}, we split the state $\vec{x}$ into a periodic part \mbox{$\vec{x}_{\mr{p}}:=\mat A_{\mr{p}}\vec x$} and a non-periodic part \mbox{$\vec{x}_{\mr{np}}:=\mat A_{\mr{np}}\vec x$}  by introducing the constant orthonormal selection matrix \mbox{$\mat A_{\mr{s}}=\left[\begin{smallmatrix} \mat A_{\mr{p}}\\ \mat A_{\mr{np}}\end{smallmatrix}\right]\in \mathbb{R}^{2n_\mr{q}\times 2n_\mr{q}}$}. 

%
\revcomment{1.11}{In the following, we do not implement the time-to-impact function and thus, decouple the time duration~$t_i$ of each phase~$i$ from the initial conditions~$\vec x_{0,i}$.} 
This allows us to move away from an event-driven evaluation of~$\Sigma_{\xi}$.
In this approach, the event constraints \revcomment{1.11}{$e_i^{i+1}$} become explicit components of the root function~$\vec{r}_{\bar{E}}$, rather than being implicitly stated in the set~\revcomment{1.11}{$\mathcal{E}_i^{i+1}$}.
This change greatly facilitates the computation of the derivatives in $\mat{\Phi}_T$.
Hence, a periodic solution for a given~$\bar{E}$ can be obtained numerically by solving the root-finding problem~\revcomment{1.11}{\mbox{$\tilde{\vec{r}}_{\bar{E}}:\mathbb{R}^{2n_\mr{q}+m+2}\to\mathbb{R}^{2n_\mr{q}+m+2}$}:
\begin{equation}\label{eq:rootNum}
    \begin{aligned}
    \tilde{\vec r}_{\bar{E}}(\vec x_0,\vec{t},\xi)&= \begin{bmatrix}[1.2]  \mat{A}_{\mr{p}}\cdot\left(\vec{\varphi}_{1}(t_{m+1},\vec{x}_{0,m+1};\xi)-\vec{x}_0\right)\\
    \mat{A}_{\mr{np}}\cdot \vec{x}_0\\
    a(\vec x_0)\\
    E(\vec x_0) -\bar{E}\\
    e_{m}^{1}\left(\vec{\varphi}_m\left(t_{m},\vec{x}_{0,m};\xi\right)\right)\\
    \vdots\\
    e_{1}^{2}\left(\vec{\varphi}_1\left(t_1,\vec{x}_{0,1};\xi\right)\right)
    \end{bmatrix}\\[1mm]
    &=\vec 0,
    \end{aligned}
\end{equation}
where $\vec{t}=[t_1\dots t_{m+1}]\T$ and the initial states $\vec x_{0,i}$ of each mode are defined recursively as in equations \eqref{eq:recursion}, substituting the function $t_{\mr{I},i}$ by the variable $t_i$.}
With \mbox{$\vec{z}\T=[\vec{x}_0\T~\vec t\T~\xi]$}, we refer to the Jacobian of $\tilde{\vec r}_{\bar{E}}$ as $ \tilde{\mat R}_{\bar{E}} := \partial \tilde{\vec{r}}_{\bar{E}}/\partial\vec{z}$.

In addition to the implicit equation \eqref{eq:rootNum}, we define an extended root function \revcomment{1.11}{\mbox{$\tilde{\vec r}:\mathbb{R}^{2n_\mr{q}+m+3}\to\mathbb{R}^{2n_\mr{q}+m+2}$}} that also includes $\bar{E}$ as a free variable:
\begin{align}\label{eq:rootNumMani}
    \tilde{\vec r}(\underbrace{\vec z,\bar{E}}_{=:\vec{u}}) &:= \tilde{\vec r}_{\bar{E}}(\vec z),\\
    \tilde{\mat R}(\vec{u}) &:= \dfrac{\partial\tilde{\vec r}}{\partial \vec u} = \begin{bmatrix}\tilde{\mat R}_{\bar{E}}(\vec{z}) & \dfrac{\partial\tilde{\vec r}}{\partial \bar{E}}\end{bmatrix}.
\end{align}
If $\vec{z}^\ast$ is a regular point of $\tilde{\vec{r}}_{\bar{E}}$, then $\tilde{\vec{r}}(\vec{u)}=\vec{0}$ characterizes a locally defined 1D solution manifold. 
The function $\tilde{\vec{r}}$ is well suited for a pseudo-arclength continuation which is utilized to compute generators. 
This approach employs a predictor-corrector (PC) method with a variable step size~$h$ (Chapter~6.1 \cite{allgower2012}), which takes small iterative steps in the tangent space of $\tilde{\vec r}(\vec{u})=\vec 0$ to locally trace the solution curve of regular points.
This tangent space is equivalent to the kernel of $\tilde{\mat{R}}$ at a regular point $\vec{u}^\ast$ of equation \eqref{eq:rootNumMani}, with the tangent vector $\vec{p}$:
\begin{align}
    \tilde{\mat{R}}(\vec{u}^\ast)\vec{p}=\vec 0,~~
    \Vert\vec p\Vert_2 = 1,~~  \det \Bigg(\underbrace{\begin{bmatrix}[1.2]\tilde{\mat{R}}(\vec{u}^\ast)\\
    \vec p\T\end{bmatrix}}_{=:\mat{J}}\Bigg)>0. \label{eq:testBP}
\end{align}
\begin{algorithm}[t]
\small
\LinesNumbered
\DontPrintSemicolon 
\KwIn{Regular point $\vec{u}^\ast$; Initial step-size $h>0$ \mbox{Maximal number of generated points $N_\mathrm{max}$}}
\KwOut{Generator $\mathcal{M}_j$, BP, TP, IP}
$\vec u^{0} \gets \vec{u}^\ast$\;
add $\vec u^0$ to $\mathcal{M}_j$\;
$d_{\mr{t}} = +1$ \Comment*[r]{Direction of curve}
\While{$k = 0\dots N_\mathrm{max}$ } {
  \SetKwProg{Fn}{PC-step}{:}{}
  \Fn{$\mr{(}\vec{u}^k,~d_{\mr{t}}\mr{)}$}{
        \textbf{Predictor Step} (Explicit-Euler Step)\;
        $\vec{u}_{\mr{pred}}^{k+1}\gets \vec{u}^{k}+d_\mr{t} h \vec p^k$\;
        \textbf{Corrector Step} (Newton's Method)\;
        \KwRet $\vec u^{k+1}=(\vec{z}^{k+1},\bar{E}^{k+1})$ \;
  }
  $isSpecialPoint \gets true$\;
  \uIf{$\vec{u}^{k+1}$ is inadmissible}{
    search for IP between $\vec{u}^k$ and $\vec{u}^{k+1}$\;
  }
  \uElseIf{$\vec{p}^k\cdot\vec{p}^{k+1}<0$}{
    search for simple BP between $\vec{u}^k$ and $\vec{u}^{k+1}$ \;
  }
  \uElseIf{$\det(\tilde{\mat{R}}_{\bar{E}}(\vec{z}^{k}))\cdot\det(\tilde{\mat{R}}_{\bar{E}}(\vec{z}^{k+1}))<0$}{
    search for TP between $\vec{u}^k$ and $\vec{u}^{k+1}$\;
  }
  \Else{
  add $\vec u^{k+1}$ to $\mathcal{M}_j$\;
  $isSpecialPoint \gets false$\;
  }
  \If{$isSpecialPoint$}{
  \eIf{$d_\mr{t}=+1$}{$d_\mr{t}=-1$\;
  $\vec{u}^{k+1}\gets\vec{u}^0$}
  {$\textbf{break}$}
  }
}
\Return{$\mathcal{M}_j$, $\mr{BP}$, $\mr{TP}$, $\mr{IP}$}\;
\caption{Compute Generator $\mathcal{M}_j$}
\label{algo:computeGenerator}
\end{algorithm}
\begin{algorithm}[t]
\small
\LinesNumbered
\DontPrintSemicolon 
\KwIn{Starting point $\vec{u}_0$; \mbox{Maximal number of generators $N_\mathrm{max}$}}
\KwOut{Connected Component $\mathcal{V}$}
push $\vec{u}_0$ to queue $\mr{Q}$\;
\While{$k = 1\dots N_\mathrm{max}$ $\bf{and}$ $\mr{Q}$ is not empty} 
{
  \SetKwProg{Fn}{Algorithm 1}{:}{}
  pull $\vec{u}^\ast$ from Q\;
  \Fn{$\mr{(}\vec{u}^\ast\mr{)}$}{
        \KwRet $\mathcal{M}_k$, TP, BP, IP\;
  }
  add $\mathcal{M}_k$, TP, BP to $\mathcal{V}$\;
  find regular points $\vec{u}_i^\ast$ in nbhd of TP, BP\;
  \lForEach{$\vec{u}_i^\ast$ not in $\mathcal{V}$}{push $\vec{u}_i^\ast$ to $\mr{Q}$}}
\Return{$\mathcal{V}$}\;
\caption{Compute Connected Component $\mathcal{V}$}
\label{algo:Strat}
\end{algorithm}
As the curve can be locally pursued in two directions, $\det(\mat J)>0$ defines positive orientation \cite{allgower2012}.

In this process, the crossing of simple (codimension-one\footnote{A simple or codimension-one bifurcation point $\vec{u}_{s}$ is defined by a loss of rank in $\tilde{\mat{R}}$, i.e., $\mr{rank}(\tilde{\mat{R}}(\vec{u}_{s}))= 2n_\mr{q}+n$.}) bifurcations are detected by a flip in direction of the tangent vector $\vec{p}$ (i.e., $\vec{p}^k \cdot \vec{p}^{k+1}<0$) \cite{allgower2012}. 
The detection of turning points (TP) follows from a change in sign of $\det(\tilde{\mat{R}}_{\bar{E}}(\vec{z}))$ (i.e., \mbox{$\det(\tilde{\mat{R}}_{\bar{E}}(\vec{z}^{k}))\det(\tilde{\mat{R}}_{\bar{E}}(\vec{z}^{k+1}))<0$}), in which $\vec u$ remains a regular point of equation \eqref{eq:rootNumMani}.
In Algorithm~\ref{algo:computeGenerator}, the curve is traversed in both directions until a \emph{special} point is detected. 
Special points $\vec{u}^{k+1}$ are the result of a \mbox{PC-step} that has crossed a BP, TP, or IP. Herein, non-successful PC-steps (e.g., divergence in Newton's method) are also considered inadmissible (IP).
The algorithm returns the new generator~$\mathcal{M}_j$ and its associated TPs and BPs.
The curve~$\mathcal{M}_j$ has at most 2 limiting special points.
As mentioned previously, TPs and BPs are singular points that connect to different generators $\mathcal{M}_j$. 
\mbox{Algorithm \ref{algo:Strat}} constructs a subset of the space of connected components. 
It utilizes a breadth-first-search to explore different generators given the location of connected TPs and BPs.
Locations of regular points $\tilde{\vec{u}}_i$ in the neighborhood of simple bifurcations can be found with the bifurcation equation~(Chapter~8.3~\cite{allgower2012}).
As indicated above, it is essential to have a problem specific starting point $\vec{u}_0$ that solves equation \eqref{eq:rootNum} and is regular.

We note that Algorithm \ref{algo:computeGenerator} is only able to detect TPs and simple BPs. 
Bifurcations of codimension-two and higher are overlooked or wrongly classified as simple bifurcations.
Test functions for their detection are described in \cite{beyn2001}.
\section{Example: One-Legged Hopper}
\label{sec:example}
\subsection{Model Description}
In this section, we highlight the application of our method to a SLIP-like one-legged hopper introduced in \cite{gan2018} with passive swing leg dynamics that are created by a torsional hip spring~(Fig. \ref{fig:Hopper}).
Here, however, it is derived in a more formal manner including a rigorous treatment of the previously unsolved issue of the spring leg dynamics during flight.
This motion, which becomes singular for vanishing foot-masses, was simply ignored in \cite{gan2018} and is treated here by the inclusion of additional holonomic constraints.

The model consists of a torso with mass $m_\mathrm{t}$ which is constrained to purely linear motions as defined in \cite{gan2018}. 
Thus, the torso's configuration is given by the hip position \mbox{($x, y$)}.
The leg is connected to the hip via a rotational joint (with joint angle~$\alpha$) that includes a torsional spring (with stiffness~$k_\alpha$ and no damping).
We model the legs as massless linear springs with leg length~$l$, natural spring length~$l_o$, spring stiffness~$k_\mathrm{l}$, no damping, and a point mass~$m_\mathrm{f}$ at the foot.  
The total mass of the model is $m_o=m_t+m_f$.
We use generalized coordinates~$\vec{q}=[x~y~\alpha~l]\T$ (i.e., \mbox{$n_\mathrm{q}=4$}) to represent the configuration of the robot. 

The model has two \revcomment{1.11}{phases}: \mbox{\textit{stance} $\mr{S}$} and \mbox{\textit{flight} $\mr{F}$}. 
The corresponding constraint forces in these phases are \mbox{$\vec{\lambda}_\mr{S}=[\lambda_\mr{T}~\lambda_\mr{N}]\T$} and $\lambda_\mathrm{F}$. 
These forces satisfy the constraints
\begin{alignat}{2}
    g_\mathrm{F}(\vec{q}) &= l-l_o&=0,\label{eq:ConstraintF}\\
    \vec{g}_\mathrm{S}(\vec{q}) &= \begin{bmatrix}x+l\sin(\alpha)-x_\mathrm{c}\\y-l\cos(\alpha) \end{bmatrix}&= 0,\label{eq:ConstraintS}
\end{alignat}
during \textit{flight} and \textit{stance}, respectively.
The constraint \eqref{eq:ConstraintF} fixes the leg length to $l_o$ during flight, whereas equation \eqref{eq:ConstraintS} implements the assumption of no sliding during \textit{stance} (with a horizontal contact point position $x_\mathrm{c}$).
%
For the continuous dynamics in equations \eqref{eq:DAE}, we have
\begin{equation}
    \vec k\T = \begin{bmatrix}0&0&F_\alpha &F_\mathrm{l} \end{bmatrix},
\end{equation}
where \mbox{$F_\alpha(\vec q) = - k_\alpha \alpha$},
 \mbox{$F_\mathrm{l}(\vec q) = k_\mathrm{l}(l_o-l)$},
describe the hip and leg spring forces, respectively.
Note, the tangential and normal contact forces $\lambda_\mathrm{T}$, $\lambda_\mathrm{N}$ are only active during \textit{stance}. Similar, $\lambda_\mathrm{F}\neq 0$ only holds during \textit{flight} to constraint the leg length to its natural length $l_o$. This leads to impulsive forces and thus discontinuous changes in $\dot l$ whenever the foot leaves the ground with non-zero velocity. The touch-down event \revcomment{1.11}{$e_{\mr{F}}^{\mr{S}}(\vec q) = [0~1]\cdot\vec{g}_\mathrm{S}(\vec q)$}
is defined kinematically, while the lift-off event \revcomment{1.11}{$e_{\mr{S}}^{\mr{F}}(\vec q,\dot{\vec q}) = \lambda_\mathrm{N}$} is triggered when $\lambda_\mathrm{N}$ changes sign from positive to negative.
\revcomment{1.11}{We restrict all motions to the cycle $\mr{F}\to\mr{S}\to\mr{F}$}, which is started at apex transit $a(\vec x_0) = \dot{y}_0$ during \textit{flight}. \par
\begin{figure}[t]
    \centering
    \includegraphics{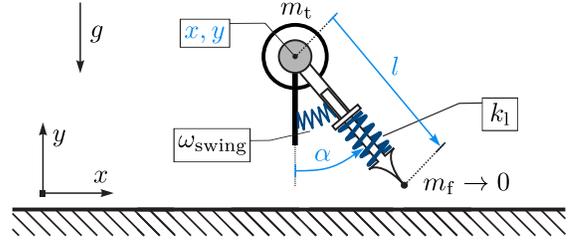}
    \caption{An energetically conservative one-legged hopper with a torsional hip spring. It's planar configuration is described by $\vec{q}=[x~y~\alpha~l]\T$.
}
    \label{fig:Hopper}
\end{figure}
Eventually, we would like to bring the foot mass $m_\mathrm{f}$ to zero to avoid kinetic energy losses during touch-down, similar to the method used in \cite{gan2018, oconnor2009, garcia1998simplest, johnson2016hybrid}. 
To fulfill \textit{Df2} and satisfy equation \eqref{eq:ZeroDelassus}, we redefine the foot mass by \mbox{$m_\mathrm{f}=\varepsilon \hat{m}_\mathrm{f}$}.
Note, for $\varepsilon\to 0$, the condition in equation \eqref{eq:ZeroDelassus}, is equally satisfied for the \textit{stance} and \textit{flight} transition:
\begin{alignat}{2}
    \lim_{\varepsilon\to 0} \mat G_\mathrm{S}(\vec{q},\varepsilon)^{-1} &= \lim_{\varepsilon\to 0} \begin{bmatrix}[1.2]\varepsilon \hat{m}_\mathrm{f} & 0\\ 0 &  \varepsilon \hat{m}_\mathrm{f}\end{bmatrix} &&=  \mat 0,\\
    \lim_{\varepsilon\to 0} G_\mathrm{F}(\vec{q},\varepsilon)^{-1} &= \lim_{\varepsilon\to 0} \dfrac{\varepsilon \hat{m}_\mathrm{f} m_\mathrm{t}}{\varepsilon \hat{m}_\mathrm{f}+m_\mathrm{t}} &&=  0.
\end{alignat}
Further, to maintain finite continuous dynamics \eqref{eq:DAE1} in the limit $\varepsilon\to 0$, we redefine the constraint forces as:
\begin{gather}\label{eq:lambda_new}
    \lambda_\mathrm{F}(\varepsilon,\vec q) = \frac{\varepsilon \hat{m}_\mathrm{f}}{m_o}\hat{\lambda}_\mathrm{F}-F_\mathrm{l}, \quad\vec{\lambda}_\mathrm{S}(\varepsilon, \vec q) = \frac{\varepsilon\hat{m}_\mathrm{f}}{m_o}\hat{\vec{\lambda}}_\mathrm{S}+\vec{s}F_\mathrm{l},
\end{gather}
with \mbox{$\vec{s}(\vec{q})=\left[\begin{smallmatrix} -\sin(\alpha)\\\cos(\alpha) \end{smallmatrix}\right]$}, introducing new auxiliary forces $\hat{\lambda}_\mathrm{F}$, $\hat{\vec{\lambda}}_\mathrm{S}$.
The core idea here is to separate the constraint forces into two components, where the first balances the elastic forces which are expressed by the known values of $F_\mathrm{l}$.
The second component balances the inertial forces and is computed when solving the differential-algebraic equations \eqref{eq:DAE}. 
This second component is further scaled with $\varepsilon$ to yield finite values for~$\hat{\lambda}_\mathrm{F}$ and $\hat{\vec{\lambda}}_\mathrm{S}$, even in the limit \mbox{$\varepsilon\to 0$}.
Equivalently to \cite{gan2018}, we prescribed a leg swing frequency $\omega_\mathrm{swing}$ by the relation 
\begin{equation}\label{eq:legswingfreq}
    k_\alpha = \omega_\mathrm{swing}^2 \underbrace{m_\mathrm{f}}_{=\varepsilon \hat{m}_\mathrm{f}}l_o^2.
\end{equation}
This implies that $\omega_\mathrm{swing}$ remains a finite constant value when the foot mass $m_\mathrm{f}$ is brought to zero and thus \mbox{$k_\alpha \to 0$}.
With the modifications in equations \eqref{eq:lambda_new}, \eqref{eq:legswingfreq} and taking the limit~\mbox{$\varepsilon\to 0$}, we arrive at the same finite dimensional dynamics reported in \cite{gan2018}.
To allow for horizontal displacement in equation \eqref{eq:rootNum}, the matrix $\mat{A}_\mr{np}$ selects the initial state~\mbox{$x_0=\vec{A}_\mr{np}\vec x_0$}.
The remaining periodic states are selected by its orthogonal complement $\mat{A}_\mr{p}$.
In this energetically conservative model, all state and parameter values are normalized with respect to~$m_o$, $g$ and $l_o$.
To allow a comparison with \cite{gan2018}, we set the leg stiffness to \unitfrac[$k_\mr{l}=40$]{$m_og$}{$l_o$} (which is equivalent to hopping with 2 legs of stiffness \unitfrac[$20$]{$m_og$}{$l_o$}) and the swing frequency to~$\omega_\mathrm{swing}=\sqrt{\unitfrac[5]{g}{l_o}}$.

\subsection{Results}
\begin{figure}[t]
    \centering
    \includegraphics[width=1\columnwidth]{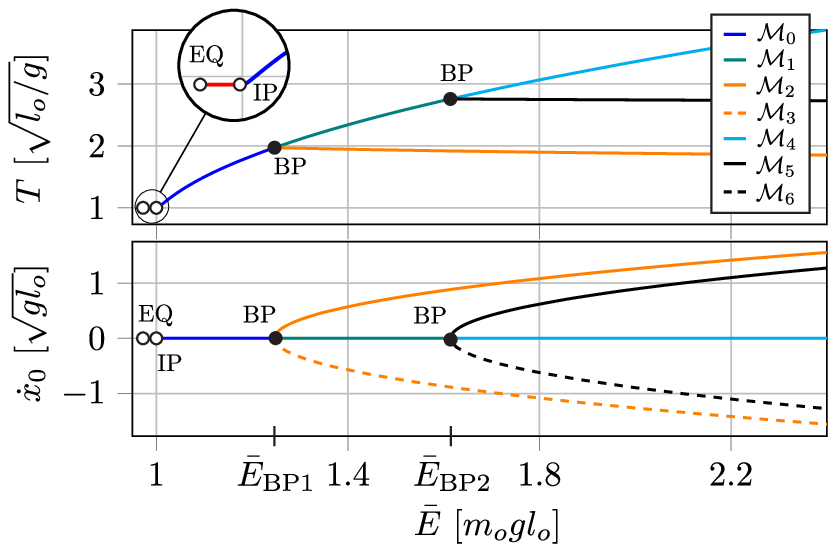}
    \caption{Visualization of connected generators $\mathcal{M}_0$ - $\mathcal{M}_6$ of the one-legged hopper. 
    Vertical hopping in-place motions are contained in $\mathcal{M}_0$, $\mathcal{M}_1$ and $\mathcal{M}_4$, while $\mathcal{M}_2$, $\mathcal{M}_5$ and $\mathcal{M}_3$, $\mathcal{M}_6$ are a collection of forward, backward gaits, respectively. All generators constitute the connected component $\mathcal{V}$ since they are connected by simple bifurcation points (BP). 
    The equilibrium~(EQ) and the contact sequence transition with vanishing \textit{flight} duration are inadmissible points~(IP).
    The locally defined 1D manifold of linear bouncing-in-place oscillations (red) is thus not in $\mathcal{V}$.
    }
    \label{fig:GeneratorSLIP}
\end{figure}
Using this model, Algorithm 2 was initialized with a vertical hopping motion at energy level \mbox{$\bar{E}=1.001~m_ogl_o$} (that is, with initial apex height of $y_0=1.001~l_o$).
Here, the motion in~$y$ and $l$ simply follows a parabolic trajectory during \textit{flight} and a linear oscillation during \textit{stance}.
There is no movement in~$x$ and $\alpha$.
This hopping motion constitutes a regular point~$\vec{u}_0$ that solves equation \eqref{eq:rootNum}. 
This initial point is connected to a locally defined \mbox{1D} manifold $\mathcal{M}_0$ (Fig. \ref{fig:GeneratorSLIP}) of hopping in place motions.
Towards lower energies, hopping height is reduced and this generator is bounded by a point that corresponds to a vanishing time $t_\mr{F}$ in \textit{flight} at energy level $\bar{E}=1~m_ogl_o$.
Periodic solutions of $\Sigma$ with even lower energy do exist, yet they correspond to an oscillating in-place motion.
Since there is no lift-off in this motion, going beyond this point leads to a change in phase sequence.
\revcomment{1.11}{This is an inadmissible point}
However, there exists a locally defined manifold with this different contact sequence~\revcomment{1.11}{$\mr{S}\to\mr{S}$}\footnote{Of interest is its connectedness to an equilibrium point (EQ) (Fig. \ref{fig:GeneratorSLIP}).
The phase sequence~\revcomment{1.11}{$\mr{S}\to\mr{S}$} admits solutions in the linear eigenspace of a 1D oscillator.
These linear modes exist in the range \mbox{$\bar{E}\in(\bar{E}_\mr{EQ},m_ogl_o)$}, where $\bar{E}_\mr{EQ}=m_og(l_o-m_og/k)$ is the energy at EQ.}.
It can be independently computed by Algorithm~\ref{algo:computeGenerator}, however, it is not in the connected component~$\mathcal{V}$ of generators with contact sequence \revcomment{1.11}{\mbox{$\mr{F}\to\mr{S}\to\mr{F}$}}.

\begin{figure}[t]
    \centering
    \includegraphics{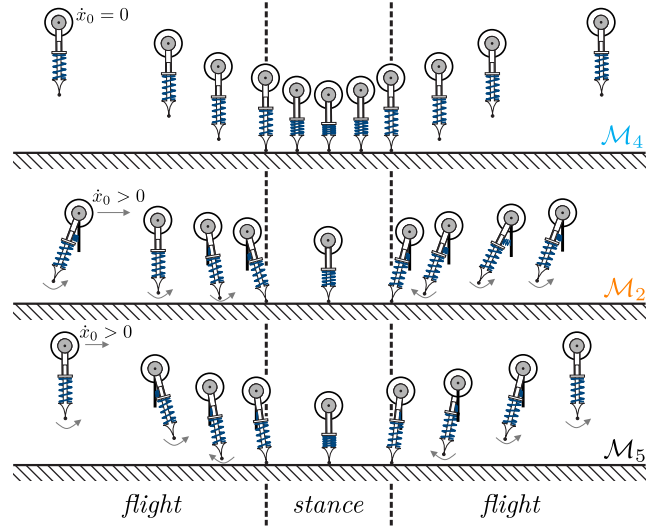}
    \caption{Key frames from periodic solutions of the SLIP model at energy level $\bar{E}=1.8~m_ogl_o$. 
    Starting from apex transit ($\dot{y}_0=0$), three gaits from the generators $\mathcal{M}_4$, $\mathcal{M}_2$ and $\mathcal{M}_5$ are illustrated in the contact sequence $\{\mr{F},\mr{S},\mr{F}\}$.
    The \textit{stance} duration differs between these gaits with: \mbox{$t_\mr{S}^{\mathcal{M}_4}\approx 0.54~\sqrt{l_o/g}$}, \mbox{$t_\mr{S}^{\mathcal{M}_2}\approx 0.49~\sqrt{l_o/g}$}, \mbox{$t_\mr{S}^{\mathcal{M}_5}\approx 0.53~\sqrt{l_o/g}$}.
    }
    \label{fig:SLIPsequence}
\end{figure}

Carrying on with~\revcomment{1.11}{$\mr{F}\to\mr{S}\to\mr{F}$}, we traverse the generator~$\mathcal{M}_0$ towards higher energies. 
$\mathcal{M}_0$ is bounded by a simple bifurcation point at energy level \mbox{$\bar{E}_{\mr{BP1}}\approx 1.247~m_ogl_o$}.  
At this point, we find three nearby generators for which the last tangent direction $\vec p$ of $\mathcal{M}_0$ points into the new generator~$\mathcal{M}_1$. 
$\mathcal{M}_1$ consists of purely vertical hopping motions at higher energies than in $\mathcal{M}_0$.  
The remaining generators, $\mathcal{M}_2$ and $\mathcal{M}_3$, consist of forward \mbox{($\dot{x}_0>0$)} and backward \mbox{($\dot{x}_0<0$)} hopping motions, respectively.
The computation of $\mathcal{M}_1$ leads to another simple bifurcation point at $\bar{E}_{\mr{BP2}}\approx 1.614~m_ogl_o$. 
We find three new connected generators $\mathcal{M}_4$--$\mathcal{M}_6$. 
The vertical motions in $\mathcal{M}_4$ are similar to gaits in $\mathcal{M}_0$ and $\mathcal{M}_1$.  
The generators $\mathcal{M}_5$ and $\mathcal{M}_6$ correspond to forward and backwards hopping motions, respectively.

Figure \ref{fig:SLIPsequence} illustrates three gaits from $\mathcal{M}_4$, $\mathcal{M}_2$ and $\mathcal{M}_5$ at energy level $\bar{E}= 1.8~m_ogl_o$.  The motions in $\mathcal{M}_2$ and $\mathcal{M}_5$ are qualitatively different in the leg's angular velocity at touch-down. 
In $\mathcal{M}_2$, the foot touches down with \revcomment{1.11}{$\dot{\alpha}_{\mr{td}}>0$} while the gaits in $\mathcal{M}_5$ possess a longer \textit{flight} duration~$t_\mr{F}$ in which the foot touches down in a returning motion with \revcomment{1.11}{$\dot{\alpha}_{\mr{td}}<0$} (so called \emph{speed matching}).  
This holds equivalently for backward hopping in  $\mathcal{M}_3$ and $\mathcal{M}_6$.


We stopped the exploration of $\mathcal{M}_2$--$\mathcal{M}_6$ with regular points at $\bar{E}=2.4~m_ogl_o$.
It is possible to encounter more \textit{special} points (BP, TP, IP) in the numerical continuation at higher energy levels.
It took approximately a minute on a laptop with an i5-8265U CPU @1.60GHz and 4GB RAM to generate the data\footnote{The code to generate this data can be found at \href{https://github.com/raffmax/ConnectingGaitsinEnergeticallyConservativeLeggedSystems}{https://github.com/ \\raffmax/ConnectingGaitsinEnergeticallyConservativeLeggedSystems}} presented in Figures \ref{fig:GeneratorSLIP} and \ref{fig:SLIPsequence}.

%
\section{Discussion \& Conclusion}
\label{sec:Discus}

In this paper, we introduced a formal framework and a generalized methodology for the computation of connected gaits in energetically conservative legged systems.
This work extends and clarifies the methodology introduced in \cite{gan2018} to apply not only to the gaits of legged models but to a broader class of ECMs.
In terms of theory, our work extends the results in \cite{sepulchre1997} to hybrid dynamical systems and clarifies the connected structure of the gait space $\mathcal{G}$ of energetically conservative legged systems.
%

Our contributions further relate the study of passive gaits to established and emerging concepts in the field of nonlinear dynamics.  
Similar to the generators in \cite{albu2020}, we (locally) define 1D manifolds in which there is a unique relation between motion and energy.
However, our definition of these generators is different in that these 1D manifolds do not include equilibria and they are defined for hybrid dynamical systems.
As a consequence, the direct connection to linear oscillations, that occur in the linearized system at equilibrium and that is a characteristic of today's NNMs definitions, is lost.
This loss is caused by two required assumptions.
The first is the transversality condition of the anchor constraint that is violated in an equilibrium.  
We introduced it here to impose a Poincaré section, yet it can potentially be lifted, as it is done in \cite{Rosa2021}.
\revcomment{1.11}{The second is the fixed phase sequence that prohibits the connection of an equilibrium at standstill to a forward gait.} 

As shown in Fig.~\ref{fig:GeneratorSLIP}, the linear modes of the 1D oscillator correspond to bouncing in place.
In future work, it may be possible to formally link them to the hopping gaits characterized in this paper.
To make this possible, we need to relax the requirement that the \revcomment{1.11}{phase} sequence is fixed.
This assumption constitutes the primary limitation of our work.
It is necessary, as the core results in this paper follow from the monodromy matrix $\mat{\Phi}_T$. 
For a fixed \revcomment{1.11}{phase} sequence, $\mat{\Phi}_T$ changes differentiably in neighboring periodic solutions and so does the associated tangent space.
\revcomment{1.11}{This is no longer true when certain assumptions from \cite{grizzle2014models}, e.g., no grazing contacts, do not hold.} 
\revcomment{1.11}{With a vanishing phase duration, a Saltation matrix \cite{leine2013dynamics} may become discontinuous \cite{ivanov1998}, which directly propagates to discontinuities in $\mat{\Phi}_T$ and the associated tangent space.}
For legged systems, continuity in the Saltation matrix can be ensured under certain conditions~\cite{pace2017}.
Turning these conditions into systematic modeling guidelines, or finding ways to connect gaits despite these discontinuities are avenues for future work.


While the focus of this paper is on energetically conservative systems, real robot systems are not energetically conservative and sources of energy loss (heat, impacts, batteries, vibrations) cannot be completely eliminated.  The benefit of our approach is in utilizing the explanatory power of ECMs.  While these systems do not exist in the real world, these simple models often form the core model dynamics for trajectory generation, motion planning, and control algorithms in the field.  
Mapping trajectories from ECMs to more realistic models with energy loss would be an interesting extension of our work, as their passivity makes them ideal candidates for the use as templates to develop energetically economical motions for legged robotic systems.

Beyond this very practical significance, the identified passive motions are a key characteristic of a given ECM.
Their study, not only in simple models of legged systems, will thus allow us to better understand the fundamental nature of gait for both, robotics and biology.

\bibliographystyle{IEEEtran}
\bibliography{references}

\begin{thebibliography}{10}
\providecommand{\url}[1]{#1}
\csname url@rmstyle\endcsname
\providecommand{\newblock}{\relax}
\providecommand{\bibinfo}[2]{#2}
\providecommand\BIBentrySTDinterwordspacing{\spaceskip=0pt\relax}
\providecommand\BIBentryALTinterwordstretchfactor{4}
\providecommand\BIBentryALTinterwordspacing{\spaceskip=\fontdimen2\font plus
\BIBentryALTinterwordstretchfactor\fontdimen3\font minus
  \fontdimen4\font\relax}
\providecommand\BIBforeignlanguage[2]{{%
\expandafter\ifx\csname l@#1\endcsname\relax
\typeout{** WARNING: IEEEtran.bst: No hyphenation pattern has been}%
\typeout{** loaded for the language `#1'. Using the pattern for}%
\typeout{** the default language instead.}%
\else
\language=\csname l@#1\endcsname
\fi
#2}}

\bibitem{mcmahon1987}
T.~A. McMahon, G.~Valiant, and E.~C. Frederick, ``Groucho running,''
  \emph{Journal of Applied Physiology}, vol.~62, no.~6, pp. 2326--2337, 1987.

\bibitem{Geyer2018}
H.~Geyer and U.~Saranli, \emph{Gait Based on the Spring-Loaded Inverted
  Pendulum}.\hskip 1em plus 0.5em minus 0.4em\relax Springer Netherlands, 2018,
  pp. 1--25.

\bibitem{Koolen2012}
T.~Koolen, T.~de~Boer, J.~Rebula, A.~Goswami, and J.~Pratt,
  ``Capturability-based analysis and control of legged locomotion, part 1:
  Theory and application to three simple gait models,'' \emph{The International
  Journal of Robotics Research}, vol.~31, no.~9, pp. 1094--1113, 2012.

\bibitem{Bhounsule2014}
P.~A. Bhounsule, J.~Cortell, A.~Grewal, B.~Hendriksen, J.~G.~D. Karssen,
  C.~Paul, and A.~Ruina, ``Low-bandwidth reflex-based control for lower power
  walking: 65 km on a single battery charge,'' \emph{The International Journal
  of Robotics Research}, vol.~33, no.~10, pp. 1305--1321, 2014.

\bibitem{Kuo_2001}
A.~D. Kuo, ``A simple model of bipedal walking predicts the preferred
  speed{\textendash}step length relationship,'' \emph{Journal of Biomechanical
  Engineering}, vol. 123, no.~3, p. 264, 2001.

\bibitem{Collins_2005}
S.~Collins, A.~Ruina, R.~Tedrake, and M.~Wisse, ``Efficient bipedal robots
  based on passive-dynamic walkers,'' \emph{Science}, vol. 307, no. 5712, pp.
  1082--1085, feb 2005.

\bibitem{gan2018}
Z.~Gan, Y.~Yesilevskiy, P.~Zaytsev, and C.~D. Remy, ``All common bipedal gaits
  emerge from a single passive model,'' \emph{Journal of The Royal Society
  Interface}, vol.~15, no. 146, p. 20180455, 2018.

\bibitem{oconnor2009}
S.~M. O'Connor, ``The relative roles of dynamics and control in bipedal
  locomotion,'' Ph.D. dissertation, University of Michigan, 2009.

\bibitem{garcia1998simplest}
M.~Garcia, A.~Chatterjee, A.~Ruina, and M.~Coleman, ``The simplest walking
  model: stability, complexity, and scaling,'' \emph{Journal of biomechanical
  engineering}, vol. 120, no.~2, pp. 281--288, 1998.

\bibitem{merker2015}
A.~Merker, D.~Kaiser, A.~Seyfarth, and M.~Hermann, ``Stable running with
  asymmetric legs: A bifurcation approach,'' \emph{International Journal of
  Bifurcation and Chaos}, vol.~25, no.~11, p. 1550152, 2015.

\bibitem{sepulchre1997}
J.-A. Sepulchre and R.~S. MacKay, ``Localized oscillations in conservative or
  dissipative networks of weakly coupled autonomous oscillators,''
  \emph{Nonlinearity}, vol.~10, no.~3, p. 679, 1997.

\bibitem{albu2020}
A.~Albu-Schaeffer and C.~Della~Santina, ``A review on nonlinear modes in
  conservative mechanical systems,'' \emph{Annual Reviews in Control}, 2020.

\bibitem{grizzle2014models}
J.~W. Grizzle, C.~Chevallereau, R.~W. Sinnet, and A.~D. Ames, ``Models,
  feedback control, and open problems of 3d bipedal robotic walking,''
  \emph{Automatica}, vol.~50, no.~8, pp. 1955--1988, 2014.

\bibitem{brogliato2016}
B.~Brogliato, \emph{Nonsmooth Mechanics: Models, Dynamics and Control}.\hskip
  1em plus 0.5em minus 0.4em\relax Springer, 2016.

\bibitem{johnson2016hybrid}
A.~M. Johnson, S.~A. Burden, and D.~E. Koditschek, ``A hybrid systems model for
  simple manipulation and self-manipulation systems,'' \emph{The International
  Journal of Robotics Research}, vol.~35, no.~11, pp. 1354--1392, 2016.

\bibitem{muller1995}
P.~C. M{\"u}ller, ``Calculation of lyapunov exponents for dynamic systems with
  discontinuities,'' \emph{Chaos, Solitons \& Fractals}, vol.~5, no.~9, pp.
  1671--1681, 1995.

\bibitem{ivanov1998}
A.~Ivanov, ``The stability of periodic solutions of discontinuous systems that
  intersect several surfaces of discontinuity,'' \emph{Journal of Applied
  Mathematics and Mechanics}, vol.~62, no.~5, pp. 677--685, 1998.

\bibitem{leine2013dynamics}
R.~I. Leine and H.~Nijmeijer, \emph{Dynamics and bifurcations of non-smooth
  mechanical systems}.\hskip 1em plus 0.5em minus 0.4em\relax Springer Science
  \& Business Media, 2013, vol.~18.

\bibitem{munoz2003continuation}
F.~J. Munoz-Almaraz, E.~Freire, J.~Gal{\'a}n, E.~Doedel, and A.~Vanderbauwhede,
  ``Continuation of periodic orbits in conservative and hamiltonian systems,''
  \emph{Physica D: Nonlinear Phenomena}, vol. 181, no. 1-2, pp. 1--38, 2003.

\bibitem{Rosa2021}
N.~Rosa and K.~M. Lynch, ``A topological approach to gait generation for biped
  robots,'' \emph{IEEE Transactions on Robotics}, pp. 1--20, 2021.

\bibitem{allgower2012}
E.~L. Allgower and K.~Georg, \emph{Numerical continuation methods: an
  introduction}.\hskip 1em plus 0.5em minus 0.4em\relax Springer Science \&
  Business Media, 2012, vol.~13.

\bibitem{pokorny2009}
P.~Pokorny, ``Continuation of periodic solutions of dissipative and
  conservative systems: application to elastic pendulum,'' \emph{Mathematical
  Problems in Engineering}, vol. 2009, pp. 1--15, 2009.

\bibitem{beyn2001}
W.-J. Beyn, A.~Champneys, E.~Doedel, W.~Govaerts, Y.~A. Kuznetsov, and
  B.~Sandstede, ``Numerical continuation, and computation of normal forms,'' in
  \emph{In Handbook of dynamical systems III: Towards applications}.\hskip 1em
  plus 0.5em minus 0.4em\relax Citeseer, 2001.

\bibitem{pace2017}
A.~M. Pace and S.~A. Burden, ``Decoupled limbs yield differentiable trajectory
  outcomes through intermittent contact in locomotion and manipulation,'' in
  \emph{2017 IEEE International Conference on Robotics and Automation
  (ICRA)}.\hskip 1em plus 0.5em minus 0.4em\relax IEEE, 2017, pp. 2261--2266.

\end{thebibliography}

\end{document}